\theoremstyle{definition}
\theoremstyle{remark}
\numberwithin{equation}{section}
\begin{document}
\mainmatter
\bibliographystyle{plain}


\title{A cost-reducing partial labeling estimator in text classification problem}
\titlerunning{cost-reducing partial labelling estimator}

\author{Jiangning Chen\inst{1} \and Zhibo Dai\inst{2}\and Juntao Duan\and Qianli Hu\and Ruilin Li\and Heinrich Matzinger\and Ionel Popescu\and Haoyan Zhai}

\authorrunning{J.Chen, Z.Dai, J.Duan, Q.Hu, R.Li, H.Matzinger, I.Popescu, H.Zhai}
\institute{School of Mathematics, Georgia Institute of Technology, Altanta, GA 30313,
\email{jchen444@math.gatech.edu},
\and
School of Mathematics, Georgia Institute of Technology, Altanta, GA 30313,
\email{zdai37@gatech.edu}}

\maketitle 

\begin{abstract}
We propose a new approach to address the text classification problems when learning with partial labels is beneficial. Instead of offering each training sample a set of candidate labels, we assign negative-oriented labels to the ambiguous training examples if they are unlikely fall into certain classes. We construct our new maximum likelihood estimators with self-correction property, and prove that under some conditions, our estimators converge faster. Also we discuss the advantages of applying one of our estimator to a fully supervised learning problem. The proposed method has potential applicability in many areas, such as crowd-sourcing, natural language processing and medical image analysis.
\keywords{machine learning, NLP, text classification, partial label, Naive Bayes, maximum likelihood estimator, self correction, cost reducing}
\end{abstract}

\section{Introduction}

In some circumstances, the process of labeling is distributed among less-than-expert assessors. With the fact that some data may belong to several classes by nature, their labeling for hundreds of pictures, texts, or messages a day is error-prone. The invention of partial labeling seeks to remedy the labor: instead of assigning one or some exact labels, the annotators can offer a set of possible candidate solutions for one sample, thus providing a buffer against potential mistakes [1, 4, 8, 16, 17, 26]; Other partial labeling settings involve repeated labeling to filter out noises, or assessing the quality of the labelers [18,22] to enhance the reliability of the models.

As the data size in companies such as FANG(Facebook, Amazon, Netflix, Google) constantly reaches the magnitude of Petabyte, the demand for quick, yet still precise labeling is ever growing. Viewing some practices, the partial labeling frameworks that we know of exhibit some limitations. For instance, in a real-world situation concerning NLP, if the task is to determine the class/classes of one article, an annotator with a bachelor degree of American literature might find it difficult to determine if an article with words dotted with 'viscosity', 'gradient', and 'Laplacian' etc. belongs to computer science, math, physics, chemistry, or none of the classes above. As a result, the annotator might struggle within some limited amount of time amid a large pool of label classes and is likely to make imprecise choices even in a lenient, positive-oriented partial labeling environment. Another issue is the cost. Repeated labeling and keeping track of the performance of each labeler may be pricey, and the anonymity of the labelers can raise another barrier wall to certain partial labeling approaches.

Taking into consideration the real world scenarios, we present a new method to tackle the problem of how to gather at a large scale partially correct information from diverse annotators, while remaining efficient and budget-friendly. Still taking the above text classification problem as the example. Although that same annotator might not easily distinguish which categories the above-mentioned article belongs to, in a few seconds he/she can rule out the possibility the article is related to cuisines, TV-entertainment, or based on his/her own expertise, novels. In our partial label formulation, the safe choices, crossed-off categories labeled by annotators can still be of benefit. Furthermore, when contradictory labels are marked on one training sample and the identities of the labelers unknown, our introduced self-correcting estimator can select, and learn from the categories where the labels agree. 

Based on this, we propose a new way to formulate partial labeling. For some documents, instead of having the exact labels, not belonging to certain classes is the information provided. To make use of both kinds of data, we propose two maximum likelihood estimators, one of which has a self-correction property to estimate the distribution of each classes. By making both type of labeled data contribute in the training process, we proved that our estimators converge faster than traditional Naive Bayes estimator. We finally find a way to apply our method to some only positive labeled data set, which is identified as a fully supervised learning problem, and achieve a better result compared to the traditional Naive Bayes. 

The rest of this paper is organized as follows. Section 2 introduces the related works about text classification and partial labeling. Section 3 introduces the formulation of this problem, and conclude the result of traditional Naive Bayes estimator. Section 4 introduces the main results of our estimator, as well as how to apply it in fully supervised learning problem. Section 5 reports experimental results of comparative studies. Finally, Section 6 concludes the paper and discusses future research issues.

\section{Related work}
The text classification problem is seeking a way to best distinguish different types of documents\cite{dumais1998inductive,larkey1998automatic}. Being a traditional natural language processing problem, one needs to make full use of the words and sentences, converting them into various input features, and applying different models to process training and testing. A common way to convert words into features is to encoding them based on the term frequency and inverse document frequency, as well as the sequence of the words. There are many results about this, for example, tf-idf\cite{ramos2003using} encodes term $t$ in document $d$ of corpus $D$ as:
$$tfidf(t,d,D) = tf(t,d)\cdot idf(t,D),$$ where $tf(t,d)$ is defined as term frequency, it can be computed as $tf(t,d) = \frac{|{t:t\in d}|}{|d|}$, and $idf(t,D)$ is defined as inverse document frequency, it can be computed as $idf(t,D) = \log \frac{|D|}{|\{d\in D:t\in d\}|}$. We also have n-gram techniques, which first combines $n$ nearest words together as a single term, and then encodes it with tf-idf. Recently, instead of using tf-idf, \cite{schneider2004new} defines a new feature selection score for text classification based on the KL-divergence between the distribution of words in training documents and their classes.

A popular model to achieve our aim is to use Naive Bayes model\cite{friedman1997bayesian,langley1992analysis}, the label for a given document $d$ is given by:
\begin{equation*}
    label(d) = \operatorname*{argmax}_j P(C_j)P(d|C_j),
\end{equation*} where $C_j$ is the $j$-th class. For example, we can treat each class as a multinomial distribution, and the corresponding documents are samples generated by the distribution. With this assumption, we desire to find the centroid for every class, by either using the maximum likelihood function or defining other different objective functions\cite{chen2018centroid} in both supervised and unsupervised learning version\cite{hofmann1999probabilistic}. Although the assumption of this method is not exact in this task, Naive Bayes achieves high accuracy in practical problems.

There are also other approaches to this problem, one of which is simply finding linear boundaries of classes with support vector machine\cite{joachims1998text,cortes1995support}. Recurrent Neural Network (RNN)\cite{liu2016recurrent,tang2015document} combined with word embedding is also a widely used model for this problem.

In real life, one may have different type of labels\cite{li2003learning}, in which circumstance, semi-supervised learning or partial-label problems need to be considered \cite{cour2011learning}. There are several methods to encode the partial label information into the learning framework. For the partial label data set, one can define a new loss combining all information of the possible labels, for example, in \cite{nguyen2008classification}, the authors modify the traditional $L^2$ loss
\[
L(w)=\frac{1}{n+m}\left[\sum_{i=1}^nl(x_i,y_i,w)+\sum_{i=1}^ml(x_i,Y_i,w)\right],
\]
where $Y_i$ is the possible label set for $x_i$ and $l(x_i,Y_i,w)$ is a non-negative loss function, and in \cite{cour2011learning}, they defined convex loss for partial labels as:
\[
L{_\Psi}(g(x),y) = \Psi(\frac{1}{|y|}{\sum_{a\in y}g_a(x)})+\sum_{a\notin y}\Psi(-g_a(x)),
\]
where $\Psi$ is a convex function, $y$ is a singleton, and $g_a(x)$ is a score function for label $a$ as input $x$.
A modification of the likelihood function is as well an approach to this problem and \cite{jin2003learning} gives the following optimization problem using Naive Bayes method
\[
\theta^*=\arg\max_\theta\sum_i\sum_{y_i\in S_i}p(y|x_i,\theta)
\]
where $S_i$ is the possible labels for $x_i$.

Meanwhile, the similarity of features among data could be considered to give a confidence of each potential labels for a certain data. In \cite{zhang2016partial}, K nearest neighbor (KNN) is adopted to construct a graph structure with the information of features.  While in \cite{li2003learning} Rocchio and K-means clustering are used.

\section{Formulation}
\subsection{General Setting}
Consider a classification problem with independent sample $x\in S$ and class set $C$, where $C = \{C_1,C_2,...,C_k\}$. We are interested in finding our estimator: $$\hat{y} = f(x;\theta) = (f_1(x;\theta),f_2(x;\theta),...,f_k(x;\theta))$$ for $y$, where $\theta = \{\theta_1,\theta_2,...,\theta_m\}$ is the parameter, and $f_i(x;\theta)$ is the likelihood function of sample $x$ in class $C_i$. Now assuming that in training set, we have two types of dataset $S_1$ and $S_2$, such that $S = S_1\cup S_2$: 
\begin{enumerate}

\item dataset $S_1$: we know exactly that sample $x$ is in a class, and not in other classes. In this case, define: $y = (y_1,y_2,...,y_k)$, if $x$ is in class $C_i$, then $y_i=1$. Notice that if this is a single label problem, then we have: $\sum_{i = 1}^k y_i = 1$.

\item dataset $S_2$: we only have the information that sample $x$ is not in a class, then $y_i=0$. In this case, define: $z = (z_1,z_2,...,z_k)$, if $x$ is not in class $C_i$, we have $z_i=1$.

\end{enumerate}

To build the model, we define the following likelihood ratio function and lihoodhood function:

\begin{equation}\label{likelihood_function}
    L_1(\theta) = \prod_{x\in S_1}\prod_{i=1}^k f_i(x;\theta)^{y_i}\prod_{x\in S_2}\prod_{i=1}^k f_i(x;\theta)^{\frac{1-z_i}{k-\sum_{j\neq i}z_j}}.
\end{equation}

\begin{equation}\label{likelihood_ratio_function}
    L_2(\theta) = \prod_{x\in S}\frac{\prod_{i=1}^k f_i(x;\theta)^{y_i(x)+t}}{\prod_{i=1}^k f_i(x;\theta)^{z_i(x)}}=\prod_{x\in S}{\prod_{i=1}^k f_i(x;\theta)^{y_i(x)-z_i(x)+t}}.
\end{equation}
The $t$ in $L_2$ satisfy $t>1$, which is a parameter to avoid non-convexity. 

The intuition of $L_1$ is to consider the sample labeled $z_i=1$ has equal probability to be labeled in the other classes, each of the classes will have probability $\frac{1-z_i}{k-\sum_{j\neq i}z_j}.$ And the intuition of $L_2$ is to consider this in a likelihood ratio way, the $z_i=1$ labeled sample will have negative affection for class $C_i$, so we put it in the denominator. With $t>1$, all the terms in denominator will finally be canceled out, so that even $f_i(x;\theta)=0$ for some sample $x\in S$ will not cause trouble. Another intuition for $L_2$ is that, it can be self-correct the repeated data, which has been labeled incorrectly.

Take logarithm for both side, we obtain the following functions:
\begin{equation}\label{log_likelihood}
    \log(L_1(\theta)) = \sum_{x\in S_1} \sum_{i = 1}^k y_i(x)\log{f_i(x,\theta)} + \sum_{x\in S_2} \sum_{i = 1}^k {\frac{1-z_i}{k-\sum_{j\neq i}z_j}}\log{f_i(x,\theta)},
\end{equation}
and
\begin{equation}\label{log_likelihood_ratio}
    \log(L_2(\theta)) = \sum_{x\in S} \sum_{i = 1}^k (y_i(x)+t-z_i(x))\log{f_i(x,\theta)}.
\end{equation}

We would like to find our estimator $\hat{\theta}$ such that \eqref{log_likelihood_ratio} or \eqref{log_likelihood} reaches maximum.



\subsection{Naive Bayes classifier in text classification problem}
For Naive Bayes model. Let class $C_i$ with centroid $\theta_i = (\theta_{i_1},\theta_{i_2},...,\theta_{i_v}),$
where $v$ is the total number of the words and $\theta_i$ satisfies: $\sum_{j=1}^v \theta_{i_j} = 1$. Assuming independence of the words, the most likely class for a document $d = (x_1,x_2,...,x_{v})$ is computed as:

\begin{eqnarray}\label{naive_bayes}
label(d) &=& \operatorname*{argmax}_i P(C_i)P(d|C_i)\\
         &=& \operatorname*{argmax}_i P(C_i)\prod_{j=1}^{v} (\theta_{i_j})^{x_j}\nonumber\\
         &=& \operatorname*{argmax}_i \log{P(C_i)} + \sum_{j=1}^{v} x_j\log{\theta_{i_j}}\nonumber.
\end{eqnarray}

So we have:
\begin{equation*}
    \log f_i(d,\theta) = \log{P(C_i)} + \sum_{j=1}^{v} x_j\log{\theta_{i_j}}.
\end{equation*}

For a class $C_i$, we have the standard likelihood function: 
\begin{equation}\label{nb_likelihood}
L(\theta) = \prod_{x\in C_i}\prod_{j=1}^v \theta_{i_j}^{x_j},
\end{equation}
Take logarithm for both side, we obtain the log-likelihood function:
\begin{equation}\label{nb_log_likelihood}
\log{L(\theta)} = \sum_{x\in C_i}\sum_{j=1}^v x_j\log{\theta_{i_j}}.
\end{equation}

We would like to solve optimization problem:
\begin{eqnarray}\label{nb_optimal_prob}
\max\ & &L(\theta)\\
\text{subject to}: & &\sum_{j = 1}^v \theta_{i_j} = 1 \nonumber \\
& & \theta_{i_j}\geq 0.
\end{eqnarray}

The problem \eqref{nb_optimal_prob} can be solve explicitly with \eqref{nb_log_likelihood} by Lagrange Multiplier, for class $C_i$, we have $\theta_{i} = \{\theta_{i_1},\theta_{i_2},...,\theta_{i_v}\}$, where:
\begin{equation}\label{nb_estimator}
\hat{\theta}_{i_j} = \frac{\sum_{d\in C_i}x_j}{\sum_{d\in C_i}\sum_{j=1}^v x_j}.
\end{equation}

For estimator $\hat{\theta}$, we have following theorem.

\begin{theorem}\label{nb_property}
Assume we have normalized length of each document, that is: $\sum_{j=1}^v x_j = m$ for all $d$, the estimator \eqref{nb_estimator} satisfies following properties:
\begin{enumerate}
    \item   
    $\hat{\theta}_{i_j}$ is unbiased.
    \item   
    $E[|\hat{\theta}_{i_j}-\theta_{i_j}|^2] = \frac{\theta_{i_j}(1-\theta_{i_j})}{|C_i|m}$.
\end{enumerate}
\end{theorem}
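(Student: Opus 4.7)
The plan is to exploit the multinomial structure that is implicit in the Naive Bayes setup: each document $d \in C_i$ is a multinomial sample with $m$ trials and probability vector $\theta_i = (\theta_{i_1}, \ldots, \theta_{i_v})$, so the coordinate $x_j(d)$ is marginally $\text{Binomial}(m, \theta_{i_j})$. Once this is recognized, both claims reduce to elementary moment calculations. The key observation is that under the normalization hypothesis $\sum_{j=1}^v x_j = m$ for every document, the denominator in \eqref{nb_estimator} becomes the deterministic constant $|C_i| \cdot m$, so
\[
\hat{\theta}_{i_j} \;=\; \frac{1}{|C_i|\, m} \sum_{d\in C_i} x_j(d),
\]
i.e.\ a simple average of independent random variables.

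For part (1), I would apply linearity of expectation: $E[x_j(d)] = m\theta_{i_j}$ for each $d \in C_i$, so summing over the $|C_i|$ documents and dividing by $|C_i|\,m$ gives $E[\hat{\theta}_{i_j}] = \theta_{i_j}$. For part (2), since unbiasedness has just been established, the mean squared error equals the variance. Using independence of documents and the binomial variance $\mathrm{Var}(x_j(d)) = m\theta_{i_j}(1-\theta_{i_j})$,
\[
E\!\left[|\hat{\theta}_{i_j}-\theta_{i_j}|^2\right] \;=\; \mathrm{Var}(\hat{\theta}_{i_j}) \;=\; \frac{1}{(|C_i|\,m)^2}\cdot |C_i|\cdot m\,\theta_{i_j}(1-\theta_{i_j}) \;=\; \frac{\theta_{i_j}(1-\theta_{i_j})}{|C_i|\,m}.
\]

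There is no real obstacle here; the statement is essentially a restatement of the mean and variance of a binomial divided by its sample size. The only subtle point worth flagging in the write-up is that the normalization $\sum_j x_j = m$ is what allows the denominator to be treated as constant — without it, $\hat{\theta}_{i_j}$ would be a ratio of two random sums and the variance formula would require a delta-method approximation rather than an exact closed form.
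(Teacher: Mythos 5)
Your proposal is correct and follows essentially the same route as the paper's proof: both reduce the problem to multinomial moment computations after noting that the normalization makes the denominator the constant $|C_i|m$, with the only cosmetic difference being that you invoke the variance of a sum of independent binomial marginals directly, while the paper expands $E[\hat{\theta}_{i_j}^2]$ into diagonal and cross terms before subtracting $\theta_{i_j}^2$. No gaps.
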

The proof of this theorem can be found in appendix.

\section{Main Result}
From Theorem.\ref{nb_property}, we can see that traditional Naive Bayes estimator $\hat{\theta}$ is an unbiased estimator with variance $O(\frac{\theta_{i_j}(1-\theta_{i_j})}{|C_i|m})$. Now we are trying to solve our estimators, and prove they can use the data in dataset $S_2$, and perform better than traditional Naive Bayes estimator.

\subsection{Text classification with $L_1$ setting \eqref{likelihood_function}} In order to use data both in $S_1$ and $S_2$, we would like to solve \eqref{nb_optimal_prob} with $L(\theta) = L_1(\theta)$, where $L_1$ is defined as \eqref{likelihood_function}, let:
$$G_i = 1 - \sum_{j=1}^v \theta_{i_j},$$ by Lagrange multiplier, we have: 

\begin{equation*}
\left\{
\begin{aligned}
&\frac{\partial \log(L_1)}{\partial \theta_{i_j}}+\lambda_i\frac{\partial G_i}{\partial \theta_{i_j}}=0\  \forall\ 1\leq i\leq k\text{ and } \forall\ 1\leq j\leq v\\
&\sum_{j=1}^v \theta_{i_j} = 1,\ \forall\ 1\leq i\leq k
\end{aligned}
\right.
\end{equation*}

Plug in, we obtain:
\begin{equation}\label{L_1_nb_solutions}
\left\{
\begin{aligned}
&\sum_{x\in S_1}\frac{y_i(x)x_j}{\theta_{i_j}} + \sum_{x\in S_2}\frac{1-z_i(x)}{k-\sum_{l\neq i}z_l(x)}\cdot\frac{x_j}{\theta_{i_j}} - \lambda_i = 0\\
&\sum_{j=1}^v \theta_{i_j} = 1,\ \forall\ 1\leq i\leq k
\end{aligned}
\right.
\end{equation}
Here we take $i$ and $j$ to be $\forall\ 1\leq i\leq k\text{ and } \forall\ 1\leq j\leq v $

Solve \eqref{L_1_nb_solutions}, we got the solution of optimization problem \eqref{nb_optimal_prob}:
\begin{equation}\label{L_1_optimal_solutions}
\hat{\theta}_{i_j}^{L_1} = \frac{\sum_{x\in S_1}y_i(x)x_j+\sum_{x\in S_2}\frac{1-z_i(x)}{k-\sum_{l\neq i}z_l(x)}x_j}{\sum_{x\in S_1}y_i(x)\sum_{j=1}^v x_j+\sum_{x\in S_2}\frac{1-z_i(x)}{k-\sum_{l\neq i}z_l(x)}\sum_{j=1}^v x_j}.
\end{equation}

\begin{theorem}\label{L_1_property}
Assume we have normalized length of each document, that is: $\sum_{j=1}^v x_j = m$ for all $d$. Let $Z_i(x) = \frac{1-z_i(x)}{k-\sum_{l\neq i}z_l(x)}=K$, $l_{i_j}=E[x_j|Z_i=K]/m$. Assume further that $|\{i:z_i(x)=1\}|=K$ to be a constant for all $x\in S_2$, the estimator \eqref{L_1_optimal_solutions} satisfies following properties:
\begin{enumerate}
    \item   
    $\hat{\theta}_{i_j}^{L_1}$ is biased with
    \[
    E[\hat{\theta}_{i_j}^{L_1}-\theta_{i_j}]=\frac{|R_i|K(l_{i_j}-\theta_{i_j})}{|C_i|+|R_i|K}.
    \]
    \item   
    $E[|\hat{\theta}_{i_j}^{L_1}-\theta_{i_j}|^2] = O\left(\frac{1}{|S_1|+|S_2|}\right)$.
\end{enumerate}
\end{theorem}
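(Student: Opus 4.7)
The plan is to exploit the normalization assumption $\sum_{j=1}^v x_j = m$ to make the denominator of $\hat{\theta}_{i_j}^{L_1}$ essentially deterministic, after which a standard bias--variance decomposition can be applied to the numerator. First I would rewrite the denominator as $m(|C_i| + K|R_i|)$, where $|C_i|$ is the number of $S_1$-documents with true label $i$ and $|R_i|$ is the number of $S_2$-documents for which $z_i(x)=0$ (so that $Z_i(x)$ takes the common nonzero value $K = 1/(k-K_{\mathrm{excl}})$ under the assumption that $|\{i:z_i(x)=1\}|=K_{\mathrm{excl}}$ is constant across $S_2$). Once the label structure is fixed, these counts are constants, so all randomness concentrates in the numerator $N_{ij} := \sum_{x \in C_i} x_j + K\sum_{x \in R_i} x_j$.

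For part (1), I would compute $E[N_{ij}]$ by splitting it into the two sums. For $x\in C_i$, the multinomial Naive Bayes assumption used in Theorem~\ref{nb_property} gives $E[x_j] = m\theta_{i_j}$. For $x\in R_i$, the definition $l_{i_j} = E[x_j\mid Z_i=K]/m$ yields $E[x_j] = m\,l_{i_j}$. Assembling,
\[
E[\hat{\theta}_{i_j}^{L_1}] \;=\; \frac{|C_i|\,\theta_{i_j} + K|R_i|\,l_{i_j}}{|C_i| + K|R_i|},
\]
and subtracting $\theta_{i_j}$ produces the stated bias.

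For part (2), I would bound $\mathrm{Var}(N_{ij})$ using independence of documents. Each $x_j$ with $x\in C_i$ is binomial$(m,\theta_{i_j})$, contributing variance $m\theta_{i_j}(1-\theta_{i_j})$, and each $x_j$ with $x\in R_i$ has variance at most $m^2/4$ by a crude binomial bound. Summed with the weights $1$ and $K^2$, the total variance is $O(m^2(|C_i| + K^2|R_i|))$, so dividing by the squared deterministic denominator $m^2(|C_i|+K|R_i|)^2$ yields $\mathrm{Var}(\hat{\theta}_{i_j}^{L_1}) = O\!\left(1/(|S_1|+|S_2|)\right)$ once we pair $|C_i|$ with $|S_1|$ and $|R_i|$ with $|S_2|$. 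Combining with the squared bias via $E[(\hat{\theta}-\theta)^2] = \mathrm{Var}(\hat{\theta}) + (\mathrm{Bias})^2$ gives the claimed MSE bound.

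The main obstacle is handling $l_{i_j}$ cleanly: conditional on $x\in R_i$, the document $x$ is drawn from a mixture over the $k-K_{\mathrm{excl}}$ remaining candidate classes whose mixture weights depend on priors and on the conditional distribution of the exclusion labels $z$, and it is this mixture mean that defines $l_{i_j}$. The hypothesis that $|\{i:z_i(x)=1\}|$ is constant across $S_2$ is precisely what ensures the $R_i$-contributions are i.i.d.\ with common mean $m\,l_{i_j}$, so that the clean split of numerator into a $C_i$-part and a $R_i$-part is valid and the bias takes the tidy closed form stated. Being explicit about this mixture structure is also what explains why the bias does not vanish in general (unless $l_{i_j}=\theta_{i_j}$), while still permitting the $O(1/(|S_1|+|S_2|))$ control on the variance component.
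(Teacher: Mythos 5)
Your treatment of part (1) is essentially the paper's: fix the label structure so the denominator is the constant $m(|C_i|+K|R_i|)$, split the numerator over $C_i$ and $R_i$, and use $E[x_j]=m\theta_{i_j}$ for $x\in C_i$ and $E[x_j]=m\,l_{i_j}$ for $x\in R_i$. The paper phrases this through $p_i=|C_i|/|S_1|$ and $q_i=|R_i|/|S_2|$ but arrives at the same expression, and your reading $K=1/(k-K_{\mathrm{excl}})$ sensibly disentangles the paper's double use of the symbol $K$. For the second-order part your route is in fact more economical than the paper's: the paper expands $E[(\hat{\theta}_{i_j}^{L_1})^2]$ exactly term by term (introducing $L_{i_j}=E[x_j^2\mid Z_i=K]$) and subtracts the squared mean, whereas you bound the variance of the numerator directly via independence of documents, with the crude bound $m^2/4$ on each $R_i$-term; under the implicit proportionality of $|C_i|,|R_i|$ to $|S_1|,|S_2|$ both yield $\mathrm{Var}(\hat{\theta}_{i_j}^{L_1})=O\bigl(1/(|S_1|+|S_2|)\bigr)$.

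The gap is your final assembly step. You cannot obtain the stated bound on $E[|\hat{\theta}_{i_j}^{L_1}-\theta_{i_j}|^2]$ by writing $\mathrm{MSE}=\mathrm{Var}+\mathrm{Bias}^2$: by your own part (1), the bias $\frac{|R_i|K(l_{i_j}-\theta_{i_j})}{|C_i|+|R_i|K}$ tends to a nonzero constant as the sample grows (whenever $l_{i_j}\neq\theta_{i_j}$), so its square is $\Theta(1)$ and the decomposition gives only $O(1)$, not $O\bigl(1/(|S_1|+|S_2|)\bigr)$. What is actually provable---and what the paper's own proof establishes despite the wording of the theorem---is the bound on the centered quantity $E\bigl[|\hat{\theta}_{i_j}^{L_1}-E[\hat{\theta}_{i_j}^{L_1}]|^2\bigr]$, i.e.\ the variance alone; the literal MSE claim holds only when $l_{i_j}=\theta_{i_j}$. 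So you should either state your conclusion for the centered second moment, matching what your variance computation actually delivers, or explicitly note that part (2) as written is inconsistent with the nonvanishing bias of part (1).
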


\begin{proof}
\begin{enumerate}
\item We denote $Z_i(x) = \frac{1-z_i(x)}{k-\sum_{l\neq i}z_l(x)}=K$, $l_{i_j}=E[x_j|Z_i=K]/m$ and $R_i=\{x:z_i(x)=0\}$, we have:
\[
\hat{\theta}_{i_j}^{L_1} = \frac{\sum_{x\in S_1}y_i(x)x_j+\sum_{x\in S_2}Z_i(x)x_j}{(\sum_{x\in S_1}y_i(x)+\sum_{x\in S_2}Z_i(x))m} = \frac{\sum_{x\in S_1}y_i(x)x_j+\sum_{x\in S_2}Z_i(x)x_j}{(|C_i|+|R_i|K)m}
\]

Moreover, assuming that $p_i=P(y_i(x)=1)=|C_i|/|S_1|,q_i=P(z_i(x)=0)=|R_i|/|S_2|$, it holds that
\begin{align*}
E[\hat{\theta}_{i_j}^{L_1}]&=\frac{1}{(|C_i|+|R_i|K)m}\left(\sum_{x\in S_1}E[y_i(x)x_j]+\sum_{x\in S_2}E[Z_i(x)x_j]\right)\\
&=\frac{1}{(|C_i|+|R_i|K)m}\sum_{x\in S_1}p_iE[x_j|y_i(x)=1]\\
&+\frac{1}{(|C_i|+|R_i|K)m}\sum_{x\in S_2}q_iKE[x_j|Z_i(x)=K]\\
&=\frac{|C_i|E[x_j|y_i(x)=1]+|R_i|KE[x_j|Z_i(x)=K]}{(|C_i|+|R_i|K)m}\\
&=\frac{|C_i|\theta_{i_j}m+|R_i|Kl_{i_j}m}{(|C_i|+|R_i|K)m}.
\end{align*}
Thus,
\begin{equation*}
E[\hat{\theta}_{i_j}^{L_1}-\theta_{i_j}]=\frac{|R_i|K(l_{i_j}-\theta_{i_j})}{|C_i|+|R_i|K}
\end{equation*}

\item As is for the second part, we have
\begin{align*}
\left(\hat{\theta}_{i_j}^{L_1}\right)^2=\frac{1}{(|C_i|+|R_i|K)m}\Big(&\sum_{\alpha\in S_1}\sum_{\beta\in S_1}y_i(\alpha)y_i(\beta)\alpha_j\beta_j\\
&\sum_{\alpha\in S_2}\sum_{\beta\in S_2}Z_i(\alpha)Z_i(\beta)\alpha_j\beta_j\\
&\sum_{\alpha\in S_1}\sum_{\beta\in S_2}y_i(\alpha)Z_i(\beta)\alpha_j\beta_j\Big).
\end{align*}
Then, by introducing $C=(|C_i|+|R_i|K)m$ and $L_{i_j}=E[x_j^2|Z_i(x)=K]$ it is true that
\begin{align*}
E\left[\left(\hat{\theta}_{i_j}^{L_1}\right)^2\right]=&\frac{1}{C^2}
\Big(&&\sum_{x\in S_1}E[y_i^2(x)x_j^2]+\sum_{\alpha,\beta\in S_1,\alpha\neq\beta}E[y_i(\alpha)\alpha_j]E[y_i(\beta)\beta_j]\\
&&&+\sum_{x\in S_2}E[Z_i^2(x)x_j^2]+\sum_{\alpha,\beta\in S_2,\alpha\neq\beta}E[Z_i(\alpha)\alpha_j]E[Z_i(\beta)\beta_j]\\
&&&+2\sum_{\alpha\in S_1,\beta\in S_2}E[y_i(\alpha)\alpha_j]E[Z_i(\beta)\beta_j]\Big)\\
=&\frac{1}{C^2}\Big(&&|C_i|m\theta_{i_j}(1-\theta_{i_j}+m\theta_{i_j})+\left(|S_1|^2-|S_1|\right)p_i^2m^2\theta_{i_j}^2\\
&&&+|R_i|K^2L_{i_j}+\left(|S_2|^2-|S_2|\right)K^2q_i^2m^2l_{i_j}^2\\
&&&+2|C_i||R_i|m^2K\theta_{i_j}l_{i_j}|\Big)\\
=&\frac{1}{C^2}\Big(&&|C_i|m\theta_{i_j}(1-\theta_{i_j}+m\theta_{i_j})-|S_1|p_i^2m^2\theta_{i_j}^2\\
&&&+|R_i|K^2L_{i_j}-|S_2|K^2q_i^2m^2l_{i_j}^2\Big)\\
&&&+\left(\frac{|C_i|\theta_{i_j}m+|R_i|Kl_{i_j}m}{(|C_i|+|R_i|K)m}\right)^2
\end{align*}
Using the fact that $E\left|\hat{\theta}_{i_j}^{L_1}-E[\hat{\theta}_{i_j}^{L_1}]\right|^2=E\left[\left(\hat{\theta}_{i_j}^{L_1}\right)^2\right]-\left(E[\hat{\theta}_{i_j}^{L_1}]\right)^2$, we can conclude that
\begin{align*}
&E\left|\hat{\theta}_{i_j}^{L_1}-E[\hat{\theta}_{i_j}^{L_1}]\right|^2\\
&=\frac{1}{(|C_i|+|R_i|K)^2m^2}&&\Big(|C_i|m\theta_{i_j}(1-\theta_{i_j}+m\theta_{i_j})-|S_1|p_i^2m^2\theta_{i_j}^2\\
&&&+|R_i|K^2L_{i_j}-|S_2|K^2q_i^2m^2l_{i_j}^2\Big)\\
&=O\left(\frac{1}{|S_1|+|S_2|}\right)
\end{align*}
\end{enumerate}
\end{proof}

Comparing $\hat{\theta}_{i_j}$ and $\hat{\theta}_{i_j}^{L_1}$, we can see that even though our estimator is biased, we can see that the variance of $\hat{\theta}_{i_j}^{L_1}$ is significant smaller than the variance of $\hat{\theta}_{i_j}$, which means by using negative sample set, $\hat{\theta}_{i_j}^{L_1}$ converges way faster than original Naive Bayes estimator $\hat{\theta}_{i_j}$.

\subsection{Text classification with $L_2$ setting \eqref{likelihood_ratio_function}} Another way to use both $S_1$ and $S_2$ dataset is to solve \eqref{nb_optimal_prob} with $L(\theta) = L_2(\theta)$, where $L_2$ is defined as \eqref{likelihood_ratio_function}, let:
$$G_i = 1 - \sum_{j=1}^v \theta_{i_j},$$ by Lagrange multiplier, we have: 

\begin{equation*}
\left\{
\begin{aligned}
&\frac{\partial \log(L_2)}{\partial \theta_{i_j}}+\lambda_i\frac{\partial G_i}{\partial \theta_{i_j}}=0\  \forall\ 1\leq i\leq k\text{ and } \forall\ 1\leq j\leq v\\
&\sum_{j=1}^v \theta_{i_j} = 1,\ \forall\ 1\leq i\leq k
\end{aligned}
\right.
\end{equation*}

Plug in, we obtain:
\begin{equation}\label{L_2_nb_solutions}
\left\{
\begin{aligned}
&\sum_{x\in S}(y_i(x)+t-z_i(x))\frac{x_j}{\theta_{i_j}}-\lambda_i=0\  \forall\ 1\leq i\leq k\text{ and } \forall\ 1\leq j\leq v\\
&\sum_{j=1}^v \theta_{i_j} = 1,\ \forall\ 1\leq i\leq k
\end{aligned}
\right.
\end{equation}

Solve \eqref{L_2_nb_solutions}, we got the solution of optimization problem \eqref{nb_optimal_prob}:
\begin{equation}\label{L_2_optimal_solutions}
\hat{\theta}_{i_j}^{L_2} = \frac{\sum_{x\in S} (y_i(x)+t-z_i(x))x_j}{\sum_{j=1}^v \sum_{x\in S} (y_i(x)+t-z_i(x))x_j}.
\end{equation}

Notice that the parameter $t$ here is used to avoid non-convexity, when $0\leq t<1$, the optimization problem \eqref{nb_optimal_prob} has the optimizer located on the boundary of $\theta$, which cannot be solved explicitly.

\begin{theorem}\label{L_2_property}
Assume we have normalized length of each document, that is: $\sum_{j=1}^v x_j = m$ for all $d$. Let $|C_{i}|$ denote the number of documents in Class $i$ and $|D_{i}|$ denote the number of documents labelled not in Class $i$ with $p_{i}=\frac{|C_{i}|}{|S|}$ and $q_{i}=\frac{|D_{i}|}{|S|}$. Further, we assume if a document $x$ is labelled not in Class $i$, it will have equal probability to be in any other class. Then the estimator \eqref{L_2_optimal_solutions} satisfies following properties:
\begin{enumerate}
    \item   
    $\hat{\theta}_{i_j}^{L_2}$ is biased and $E[\hat{\theta}_{i_j}^{L_2}-\theta_{i_j}] = O(t)$.
    \item   
    $var[\hat{\theta}_{i_j}^{L_2}]=O(\frac{1}{m(|S_1|+|S_2|)}).$

\end{enumerate}
\end{theorem}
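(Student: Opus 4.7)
The plan is to imitate the proof of Theorem~\ref{L_1_property}, exploiting the fact that the length-normalization hypothesis collapses the denominator of $\hat{\theta}_{i_j}^{L_2}$. Because $\sum_{j=1}^v x_j=m$ for every document, the denominator in \eqref{L_2_optimal_solutions} equals $m(|C_i|+t|S|-|D_i|)$, which is deterministic once we condition on the cardinalities $|C_i|$ and $|D_i|$. Hence $E[\hat{\theta}_{i_j}^{L_2}]$ reduces to $E[\text{numerator}]$ divided by this constant, and no delta-method approximation is required.

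For the bias I would compute the three contributions to the numerator separately. The positive-label term $\sum_{x\in S_1}y_i(x)x_j$ has expectation $|C_i|m\theta_{i_j}$ as in the Naive-Bayes argument. For the negative-label term, the hypothesis that a document labelled ``not in class $i$'' is equally likely to belong to any of the other $k-1$ classes gives $E[z_i(x)x_j]=q_i m\mu_{i_j}$ with $\mu_{i_j}=(k-1)^{-1}\sum_{l\neq i}\theta_{l_j}$. The constant-shift term $t\sum_{x\in S}x_j$ has expectation $tm|S|\bar{\theta}_j$, where $\bar{\theta}_j=\sum_l p_l\theta_{l_j}$ is the mixture centroid. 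Assembling the three pieces and subtracting $\theta_{i_j}$ produces a closed-form ratio whose numerator is linear in $t$, from which the $O(t)$ bound is read off.

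For the variance I would invoke independence of the documents together with the fact that a single multinomial coordinate $x_j$ has variance at most $m$. This gives variance contributions of order $|S_1|m$, $t^2|S|m$ and $|S_2|m$ from the three pieces of the numerator (plus mixed covariances from the overlap of $\sum_{x\in S}x_j$ with $S_1\cup S_2$); dividing by $m^2(|C_i|+t|S|-|D_i|)^2=\Omega(m^2 t^2|S|^2)$ gives the advertised rate $O\bigl(1/(m(|S_1|+|S_2|))\bigr)$.

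The main obstacle will be making the $O(t)$ bias statement precise. A direct computation produces a bias of the form $(t|S|(\bar{\theta}_j-\theta_{i_j})+|D_i|(\theta_{i_j}-\mu_{i_j}))/(|C_i|+t|S|-|D_i|)$, which is bounded in $t$ but does not in general vanish as $|S|\to\infty$. The cleanest reading is that the bias is controlled by a constant multiple of $t$ uniformly in sample size, with the non-$t$ contribution matching the self-correcting bias already present in the $L_1$ case; I would want to state explicitly which parameters are held fixed and which are sent to infinity before finalizing the bound.
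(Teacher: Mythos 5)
Your proposal follows essentially the same route as the paper's proof: the length normalization makes the denominator $m\left(|C_i|-|D_i|+t|S|\right)$ deterministic, the bias comes from computing the numerator's expectation piece by piece using the uniform-over-other-classes assumption for negatively labelled documents, and the variance bound comes from cross-document independence plus the per-document multinomial variance, divided by the squared deterministic denominator, yielding $O\bigl(1/(m(|S_1|+|S_2|))\bigr)$. The only caveats are minor: in the $t$-shift term the $S_2$ documents contribute the uniform mixture $\sum_l q_l M_l/(k-1)$ (with $M_l=\sum_i \theta_{i_j}-\theta_{l_j}$) rather than the $p_l$-weighted centroid $\bar{\theta}_j$ you wrote, and your observation that the bias contains the $t$-independent piece $|D_i|(\theta_{i_j}-\mu_{i_j})/(|C_i|-|D_i|+t|S|)$ is accurate — the paper's own computation produces exactly that term and simply records the bias loosely as $O(t)$, so your more careful reading is if anything a sharpening, not a deviation.
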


\begin{proof}
First of all, we can simplify \eqref{L_2_optimal_solutions} using our assumption to be
\begin{equation*}
    \hat{\theta}_{i_j}^{L_2} = \frac{\sum_{x\in S} (y_i(x)+t-z_i(x))x_j}{\sum_{x\in S} (y_i(x)+t-z_i(x))m}.
\end{equation*}
For each $x\in S$, $\theta_{i_j}$ follows multinomial distribution. For $x\in C_{i}\subset S_{1}$, $E[x_j]=m\theta_{i_j}$ and $E[x_j^2] = m\theta_{i_j}(1-\theta_{i_j}+m\theta_{i_j})$. For $x\in S_{2}$ with $z_{i}\left(x\right)=1$, which means $x$ is labelled not in Class $i$, we have $E[x_{j}]=\frac{m\sum_{l\neq i}\theta_{l_j}}{k-1}$ and $E[x_{j}^2]=\frac{m\sum_{l\neq i}\theta_{l_j}\left(1-\theta_{l_j}\right)}{(k-1)^2}+\frac{m^2\left(\sum_{l\neq i}\theta_{l_j}\right)^2}{(k-1)^2}$. Therefore, we have the following properties:

Moreover, we let $M_{l}=\sum_{i=1}^{k}\theta_{i_j}-\theta_{l_j}$ and $N=m\sum_{x\in S}\left(y_{i}(x)+t-z_i(x)\right)=m\left(|C_{i}|-|D_{i}|+t|S|\right)$.
\begin{enumerate}
    \item \label{L_2_biased}The expectation of the statistical quantity is
    \begin{align*}
    &E[\hat{\theta}_{i_j}^{L_2}]\\
    =&\frac{\sum_{x\in S} (y_i(x)+t-z_i(x))E[x_j]}{m\left(|C_{i}|-|D_{i}|+t|S|\right)}\\
    =&\frac{t\sum_{x\in S_{1}}E\left(x_{j}\right)+t\sum_{x\in S_{2}}E\left(x_{j}\right)+m|C_{i}|\theta_{i_j}-m|D_{i}|\frac{\sum_{l\neq i}\theta_{l_j}}{k-1}}{m\left(|C_{i}|-|D_{i}|+t|S|\right)}\\
    =&\frac{t\sum_{l=1}^{k}|C_{l}|\theta_{l_j}+|C_{i}|\theta_{i_j}+t\sum_{l=1}^{k}\frac{|D_{l}|M_{l}}{k-1}-|D_{i}|\frac{M_{i}}{k-1}}{\left(|C_{i}|-|D_{i}|+t|S|\right)}\\
    =&\frac{t\sum_{l=1}^{k}|p_{l}|\theta_{l_j}+|p_{i}|\theta_{i_j}+t\sum_{l=1}^{k}\frac{|q_{l}|M_{l}}{k-1}-|q_{i}|\frac{M_{i}}{k-1}}{|p_{i}|-|q_{i}|+t}
    \end{align*}
    Therefore, we have the Biase: $E[\hat{\theta}_{i_j}^{L_2}-\theta_{i_j}]$ is in order $O\left(t\right)$, which gives us the desired result.

    \item We use the property of variance to decompose our estimation into two parts:
    \begin{equation*}
        E[|\hat{\theta}_{i_j}^{L_2}-E[\hat{\theta}_{i_j}^{L_2}]|^2]=E[(\hat{\theta}_{i_j}^{L_2})^2]-\left(E[\hat{\theta}_{i_j}^{L_2}]\right)^{2}.
    \end{equation*}
    Therefore, we have:
    \begin{align*}\label{L_2_var}
    &var[\hat{\theta}_{i_j}^{L_2}]\\
    =&E[(\hat{\theta}_{i_j}^{L_2})^2]-\left(E[\hat{\theta}_{i_j}^{L_2}]\right)^2\\
    =&E\left(\frac{\sum_{x\in S}\left(y_{i}(x)+t-z_i(x)\right)^2x_{j}^2}{N^2}\right)\\
    +&E\left(\frac{\sum_{x^1,x^2\in S,x^1\neq x^2}(y_{i}(x^1)+t-z_i(x^1))(y_{i}(x^2)+t-z_i(x^2))x^1_j x^2_j}{N^2}\right)\\
    -&\frac{(\sum_{x\in S}(y_{i}(x)+t-z_i(x))E[x_j])^2}{N^2}\\
    =&\frac{\sum_{x\in S}(y_{i}(x)+t-z_i(x))^2var(x_j)}{N^2}\\
    +&\frac{\sum_{x^1,x^2\in S,x^1\neq x^2}(y_{i}(x^1)+t-z_i(x^1))(y_{i}(x^2)+t-z_i(x^2))cov(x^1_j,x^2_j)}{N^2}.
    \end{align*}
    
    In terms of the order,  we can take the $cov(x_j^1,x_j^2) = 0$ as $x_1$ and $x_2$ are independent documents.  Therefore:
    \begin{align*}
        &var[\hat{\theta}_{i_j}^{L_2}]\\
        =&\frac{\sum_{x\in S}(y_{i}(x)+t-z_i(x))^2var(x_j)}{N^2}\\
        =&\frac{\sum_{x\in S}y_{i}(x)var(x_j)+\sum_{x\in S}z_{i}(x)var(x_j)+t^2\sum_{x\in S}var(x_j)}{N^2}\\
        &+\frac{2t\sum_{x\in S}y_{i}(x)var(x_j)-2t\sum_{x\in S}z_{i}(x)var(x_j)}{N^2}\\
        =&\frac{(1+2t)|C_{i}|\theta_{i_j}(1-\theta_{i_j})+(1-2t)\frac{|D_{i}|}{(k-1)^2}\sum_{l\neq i}\theta_{l_j}(1-\theta_{l_j})}{m\left(|C_{i}|-|D_{i}|+t|S|\right)^2}\\
        &+\frac{t^2\sum_{l=1}^{k}|C_{l}|\theta_{l_j}(1-\theta_{l_j})}{m\left(|C_{i}|-|D_{i}|+t|S|\right)^2}+\frac{t^2\sum_{l=1}^{k}\frac{|D_{l}|}{(k-1)^2}\sum_{b\neq l}\theta_{b_j}(1-\theta_{b_j})}{m\left(|C_{i}|-|D_{i}|+t|S|\right)^2}\\
        =&\frac{(1+2t)p_{i}\theta_{i_j}(1-\theta_{i_j})+(1-2t)\frac{q_{i}}{(k-1)^2}\sum_{l\neq i}\theta_{l_j}(1-\theta_{l_j})}{m|S|\left(p_{i}-q_{i}+t\right)^2}\\
        &+\frac{t^2\sum_{l=1}^{k}p_{l}\theta_{l_j}(1-\theta_{l_j})}{m|S|\left(p_{i}-q_{i}+t\right)^2}+\frac{t^2\sum_{l=1}^{k}\frac{q_{l}}{(k-1)^2}\sum_{b\neq l}\theta_{b_j}(1-\theta_{b_j})}{m|S|\left(p_{i}-q_{i}+t\right)^2}\\
        =&O(\frac{1}{|S_1|+|S_2|}).
    \end{align*}
    \end{enumerate}
    \end{proof}
    Using the same strategy as in \ref{L_2_biased}, we have the first part of our variance estimation should be of order $O(\frac{1}{m|S|})$, which is less than the order of variance for Naive Bayes estimation:$O(\frac{1}{|C_i|})$. We also showed that its order is $O(\frac{1}{|S_1|+|S_2|}) < O(\frac{1}{|C_{i}|})$,
    therefore, $\hat{\theta}_{i_j}^{L_2}$ converges faster than $\hat{\theta}_{i_j}$.

\subsection{Improvement of Naive Bayes estimator with only $S_1$ dataset} Now assume that we don't have dataset $S_2$, but only have dataset $S=S_1$, can we still do better than traditional Naive Bayes estimator $\hat{\theta}$? To improve the estimator, we can try to use $L_1$ or $L_2$ setting. With $z(x) = 1-y(x)$, we can define function $z$ on $S_1$ dataset. In this setting, we have actually defined $S_2=S_1.$

With simple computation, we have the estimator of $L_1$ is the same as  $\hat{\theta}_{i_j}$. as for the estimator for ${L_2}$, we have:
\begin{equation}\label{our_method_estimator}
    \hat{\theta}_{i_j}^* = \frac{\sum_{x\in S} (2y_i(x)+t-1)x_j}{\sum_{j=1}^v \sum_{x\in S} (2y_i(x)+t-1)x_j},
\end{equation}
and by Theorem \ref{L_2_property}, we have:
\begin{corollary}\label{our_method_property}
Assume we have normalized length of each document, that is: $\sum_{j=1}^v x_j = m$ for all $d$. With only dataset $S_1$, let $S_2=S_1$, define $z(x)=1-y(x)$, Then the estimator \eqref{our_method_estimator} satisfies following properties:
\begin{enumerate}
    \item   
    $\hat{\theta}_{i_j}^{*}$ is biased, $E[\hat{\theta}_{i_j}^{*}-\theta_{i_j}] = O(t)$.
    \item   
    $E[|\hat{\theta}_{i_j}^{*}-\theta_{i_j}|^2] = O(\frac{1}{|S|}).$
\end{enumerate}
\end{corollary}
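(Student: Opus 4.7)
The plan is to read off the corollary as a direct specialization of Theorem \ref{L_2_property}. Substituting $z_i(x) = 1 - y_i(x)$ into the exponent of \eqref{L_2_optimal_solutions} gives $y_i(x) + t - z_i(x) = 2y_i(x) + t - 1$, which identifies \eqref{our_method_estimator} with the $L_2$-estimator in the degenerate setting $S_2 = S_1$. In that setting $|D_i| = |S| - |C_i|$ and $q_i = 1 - p_i$, and the denominator $\bigl(2|C_i| + (t-1)|S|\bigr)m$ of $\hat{\theta}_{i_j}^*$ is deterministic, which will simplify both moment computations considerably.

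For part (1) I would repeat the expectation computation from the proof of Theorem \ref{L_2_property}, splitting the numerator of $\hat{\theta}_{i_j}^*$ according to whether $y_i(x) = 1$ or $y_i(x) = 0$ and using $E[x_j \mid y_i(x) = 1] = m\theta_{i_j}$ together with a convex combination over the other classes for $E[x_j \mid y_i(x) = 0]$. Subtracting $\theta_{i_j}$ times the denominator from the numerator and collecting terms, the surplus factors through $(t-1)$, so the ratio is $O(t)$ uniformly in $j$. For part (2) the same variance decomposition applies verbatim: the documents are independent, so all cross-covariance terms vanish and one is left with a single sum of per-document variances each bounded by a constant multiple of $m\theta_{l_j}(1 - \theta_{l_j})$; dividing by the squared denominator of order $|S|^2 m^2$ yields the stated $O(1/|S|)$ bound.

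The main obstacle is that the ``equal-probability'' assumption from Theorem \ref{L_2_property}, namely that a document labelled not in class $i$ lies in any other class with probability $1/(k-1)$, does not automatically hold here: with $z_i(x) = 1 - y_i(x)$ the conditional class distribution given $y_i(x) = 0$ is governed by the true priors $p_l / (1 - p_i)$ instead. I would therefore redo the conditional expectations under these genuine priors; since this modification changes $E[x_j \mid y_i(x) = 0]$ and $E[x_j^2 \mid y_i(x) = 0]$ only by $O(1)$ quantities, the final orders of bias and variance are unaffected, and the corollary follows.
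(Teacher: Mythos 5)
Your proposal matches the paper's approach: the paper proves the corollary simply by specializing Theorem \ref{L_2_property} to $S_2=S_1$, $z(x)=1-y(x)$, exactly as you do, and your moment computations with the deterministic denominator $\bigl(2|C_i|+(t-1)|S|\bigr)m$ reproduce the bias of order $O(t)$ (through the factor $t-1$) and the variance of order $O(1/|S|)$. Your observation that the equal-probability-over-other-classes assumption must be replaced by the true conditional priors $p_l/(1-p_i)$ is a point the paper glosses over, and your handling of it is sound since it only perturbs the conditional moments by bounded amounts and leaves the stated orders unchanged.
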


\section{Experiment}
We applied our method on top 10 topics of single labeled documents in Reuters-21578 data\cite{reuters_data}, and 20 news group data\cite{20_news}. we compare the result of traditional Naive Bayes estimator $\hat{\theta}_{i_j}$ and our estimator $\hat{\theta}_{i_j}^{L_1}$, $\hat{\theta}_{i_j}^{L_2}$, as well as $\hat{\theta}_{i_j}^{*}$. $t$ is chosen to be $2$ in all the following figures. The data in $S_2$ is generated randomly by not belong to a class, for example, if we know a document $d$ is in class $1$ among $10$ classes in Reuter's data, to put $d$ in $S_2$, we randomly pick one class from $2$ to $10$, and mark $d$ not in that class. All Figures are put in the appendix.

First of all, we run all the algorithms on these two sample sets. We know that when sample size becomes large enough, three estimators actually convergence into the same thing, thus we take the training size relatively small. See Figure.\ref{3_m_r} and Figure.\ref{3_m_2}. According from the experiment, we can see our methods are more accurate for most of the classes, and more accurate in average.

Then we consider a more extreme case. If we have a dataset with $|S_1|=0$, that is to say, we have no positive labeled data. In this setting, traditional Naive Bayes will not work, but what will we get from our estimators? See Figure.\ref{neg_r} and Figure.\ref{neg_2}. We can see we can still get some information from negative labeled data. The accuracy is not as good as Figure.\ref{3_m_2} and Figure.\ref{3_m_r}, that is because for each of the sample, negative label is only a part of information of positive label.

At last, we test our estimator $\hat{\theta}^{L_2}$ with only $S_1$ dataset, see Figure.\ref{Cor_r} and Figure.\ref{Cor_2}. We can see our method achieve better result than traditional Naive Bayes estimator. We try to apply same training set and test the accuracy just on training set, we find traditional Naive Bayes estimator actually achieve better result, that means it might have more over-fitting problems, see Figure.\ref{over_r} and Figure.\ref{over_20}.

\section{Conclusion}
We have presented an effective learning approach with a new labeling method for partially labeled document data, for some of which we only know the sample is surely not belonging to certain classes. We encode these labels as $y_i$ or $z_i$,  and define maximum likelihood estimator $\hat{\theta}_{i_j}^{L_1}$, $\hat{\theta}_{i_j}^{L_2}$, as well as $\hat{\theta}_{i_j}^{*}$ for multinomial Naive Bayes model based on $L_1$ and $L_2$. There are several futher questions about these estimators:
\begin{enumerate}
    \item We have proved that with multinomial Naive Bayes model, our estimators have smaller variance, which means our estimators can converge to true parameters with a faster rate than the standard Naive Bayes estimator. 
    
    An interesting question is the following: if we consider a more general situation without the text classification background and the multinomial assumption, by solving the optimization problem \eqref{nb_optimal_prob} with $L_1$ and $L_2$, can we get the same conclusion with a more general chosen likelihood function $f_i$? If not, what assumption should we raise for $f_i$ to land on a similar conclusion?
    
    \item  
     The effectiveness of an algorithm in machine learning depends heavily upon well-labeled training samples; to some extent, our new estimator can utilize incorrect-labeled data or different-labeled data. Our estimator, especially $L_2$, can resolve this problem \eqref{likelihood_ratio_function}, since the incorrect-labeled data can be canceled out by the correct-labeled data, thus the partial-labeled data can still have its contribution.

    Another question is: besides $\hat{\theta}_{i_j}^{L_1}$ and $\hat{\theta}_{i_j}^{L_2}$, can we find other estimators, or even better estimators satisfying this property?
    
    \item Based on our experiment, the traditional Naive Bayes estimator acts almost perfectly in the training set as well as during the cross validation stage, but the accuracy rate in the testing set is not ideal. To quantify this observation, we are still working on a valid justification that the traditional Naive Bayes estimator has a severe over-fitting problem in the training stage.
    
\end{enumerate}

\nocite{*}

\newpage
\appendix
\section{Proof of Theorem \ref{nb_property}}

\begin{proof}
With assumption $\sum_{j=1}^v x_j = m$, we can rewrite \eqref{nb_estimator} as:
$$\hat{\theta}_{i_j} = \frac{\sum_{d\in C_i}x_j}{\sum_{d\in C_i}m}=\frac{\sum_{d\in C_i}x_j}{|C_i|m}.$$
Since $d=(x_1,x_2,...,x_v)$ is multinomial distribution, with $d$ in class $C_i$, we have: $E[x_j] = m\cdot \theta_{i_j}$, and $E[x_j^2] = m\theta_{i_j}(1-\theta_{i_j}+m\theta_{i_j}).$
\begin{enumerate}
    \item \label{nb_unbiased}
    $$\hat{\theta}_{i_j}=E[\frac{\sum_{d\in C_i}x_j}{|C_i|m}]=\frac{\sum_{d\in C_i}E[x_j]}{|C_i|m}=\frac{\sum_{d\in C_i}m\cdot \theta_{i_j}}{|C_i|m}=\theta_{i_j}.$$
    Thus $\hat{\theta}_{i_j}$ is unbiased.
    \item
    By \eqref{nb_unbiased}, we have:
    $$E[|\hat{\theta}_{i_j}-\theta_{i_j}|^2]=E[\hat{\theta}_{i_j}^2]-2\theta_{i_j}E[\hat{\theta}_{i_j}]+\theta_{i_j}^2=E[\hat{\theta}_{i_j}^2]-\theta_{i_j}^2.$$
    Then 
    \begin{equation}\label{nb_split_hat_theta}
    \hat{\theta}_{i_j}^2=\frac{(\sum_{d\in C_i}x_j)^2}{|C_i|^2m^2}=\frac{\sum_{d\in C_i}x_j^2+\sum_{d_1,d_2\in C_i}2x_j^{d_1}x_j^{d_2}}{|C_i|^2m^2},
    \end{equation}
    where $d_i=(x_1^{d_i},x_2^{d_i},...,x_v^{d_i})$ for $i=1,2$. Since:
    $$E[\sum_{d\in C_i}x_j^2]=\frac{|C_i|m\theta_{i_j}(1-\theta_{i_j}+m\theta_{i_j})}{|C_i|^2m^2}=\frac{\theta_{i_j}(1-\theta_{i_j}+m\theta_{i_j})}{|C_i|m},$$ and
    $$E[\sum_{d_1,d_2\in C_i}2x_j^{d_1}x_j^{d_2}]=\frac{|C_i|(|C_i|-1)m^2\theta_{i_j}^2}{|C_i|^2m^2}=\frac{(|C_i|-1)\theta_{i_j}^2}{|C_i|}.$$
    Plugging them into \eqref{nb_split_hat_theta} obtains:
    $$E[\hat{\theta}_{i_j}^2] = \frac{\theta_{i_j}(1-\theta_{i_j})}{|C_i|m} + \theta_{i_j}^2,$$
    thus: $E[|\hat{\theta}_{i_j}-\theta_{i_j}|^2] = \frac{\theta_{i_j}(1-\theta_{i_j})}{|C_i|m}$.
\end{enumerate}
\end{proof}

\section{Figures}

\begin{figure}[H] \centering
\subfigure[] { \label{3_m_r}
\includegraphics[width=0.47\columnwidth]{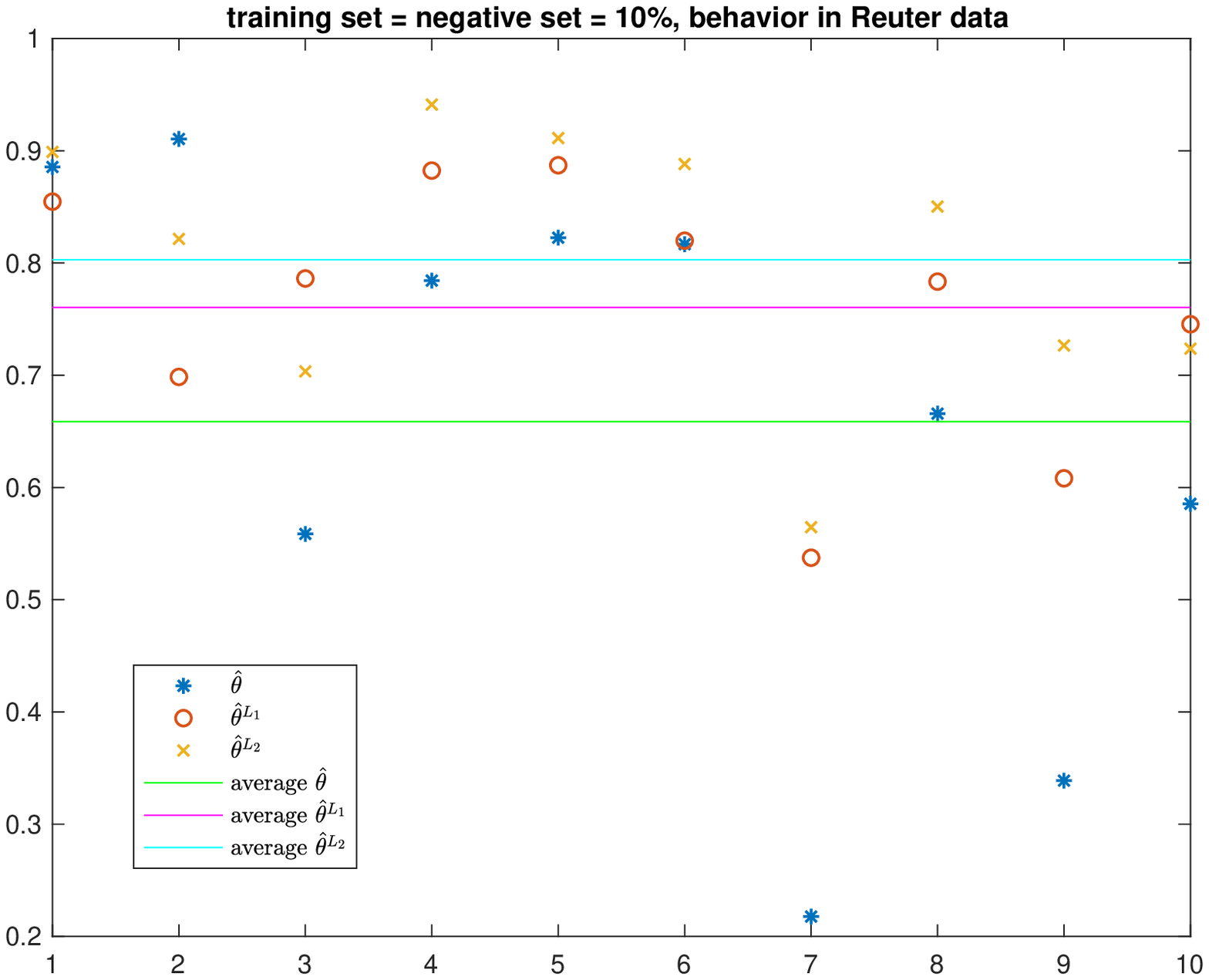} 
}
\subfigure[] { \label{3_m_2}
\includegraphics[width=0.47\columnwidth]{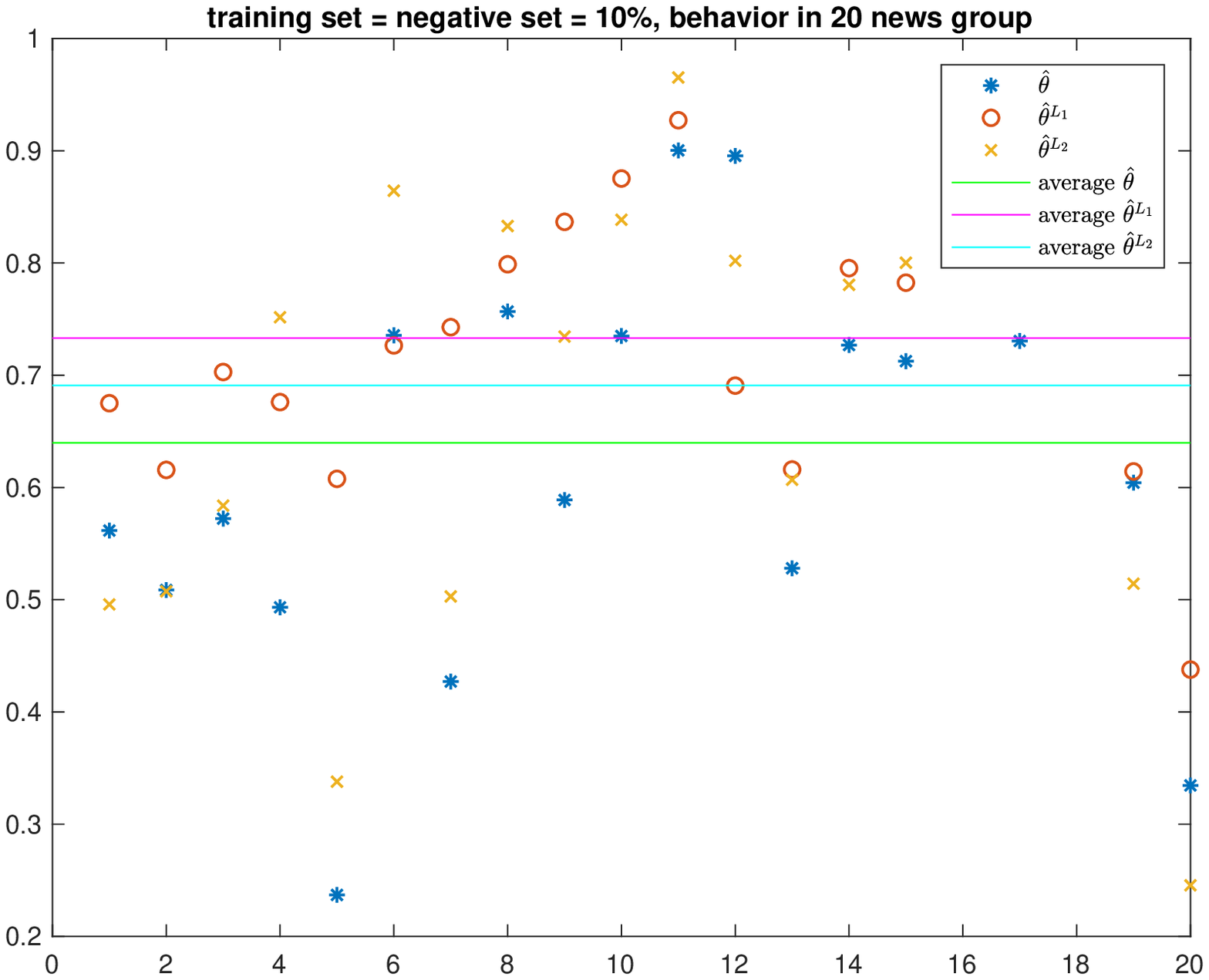} 
}
\caption{\small We take 10 largest groups in Reuter-21578 dataset (a) and 20 news group dataset (b), and take 20\% of the data as training set, among which $|S_1|=|S_2|$. The y-axis is the accuracy, and the x-axis is the class index.}
\label{3_m}
\end{figure}

\begin{figure}[H] \centering
\subfigure[]{ \label{neg_r}
  \includegraphics[width=0.47\textwidth]{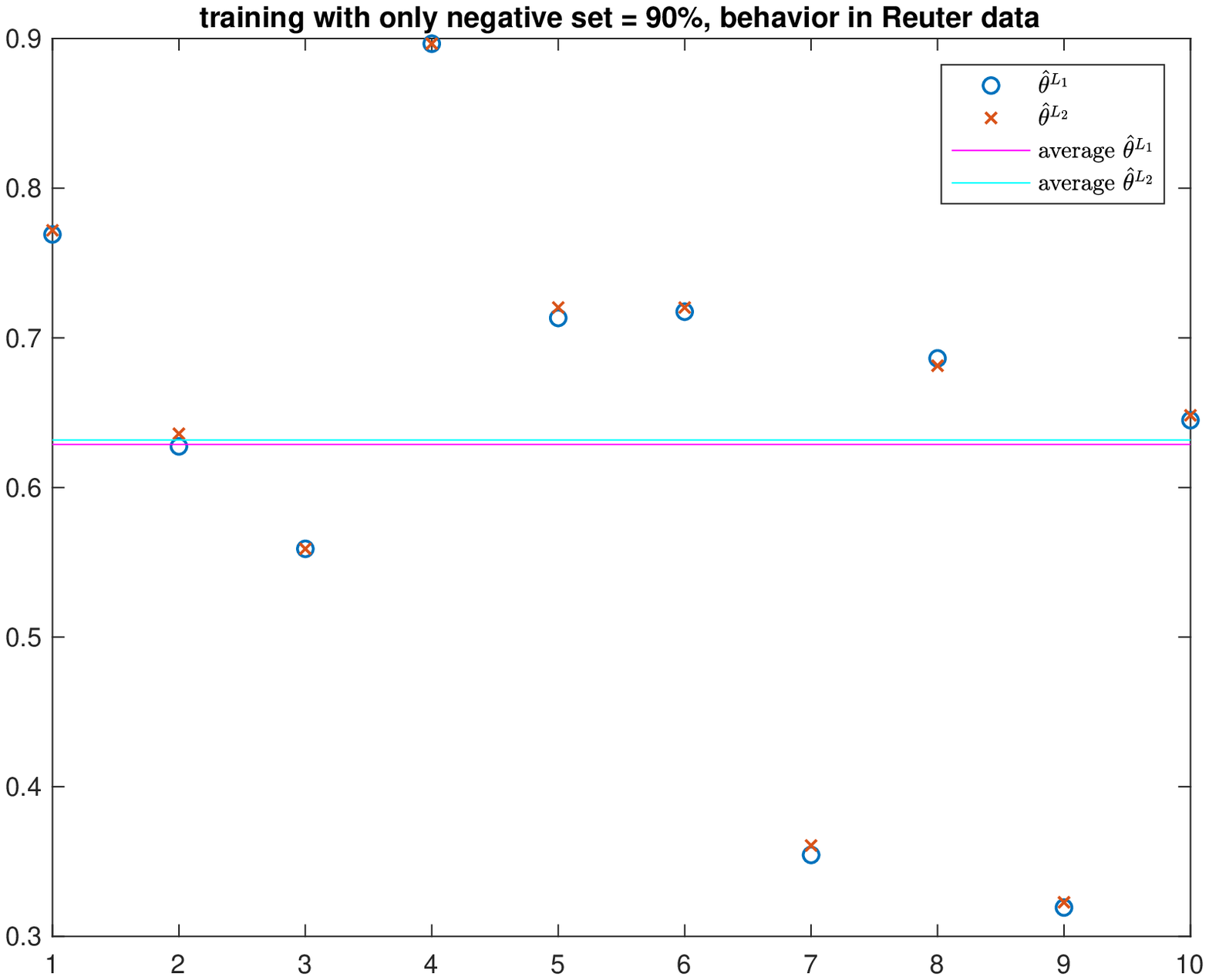}
  }
  \subfigure[]{ \label{neg_2}
  \includegraphics[width=0.47\textwidth]{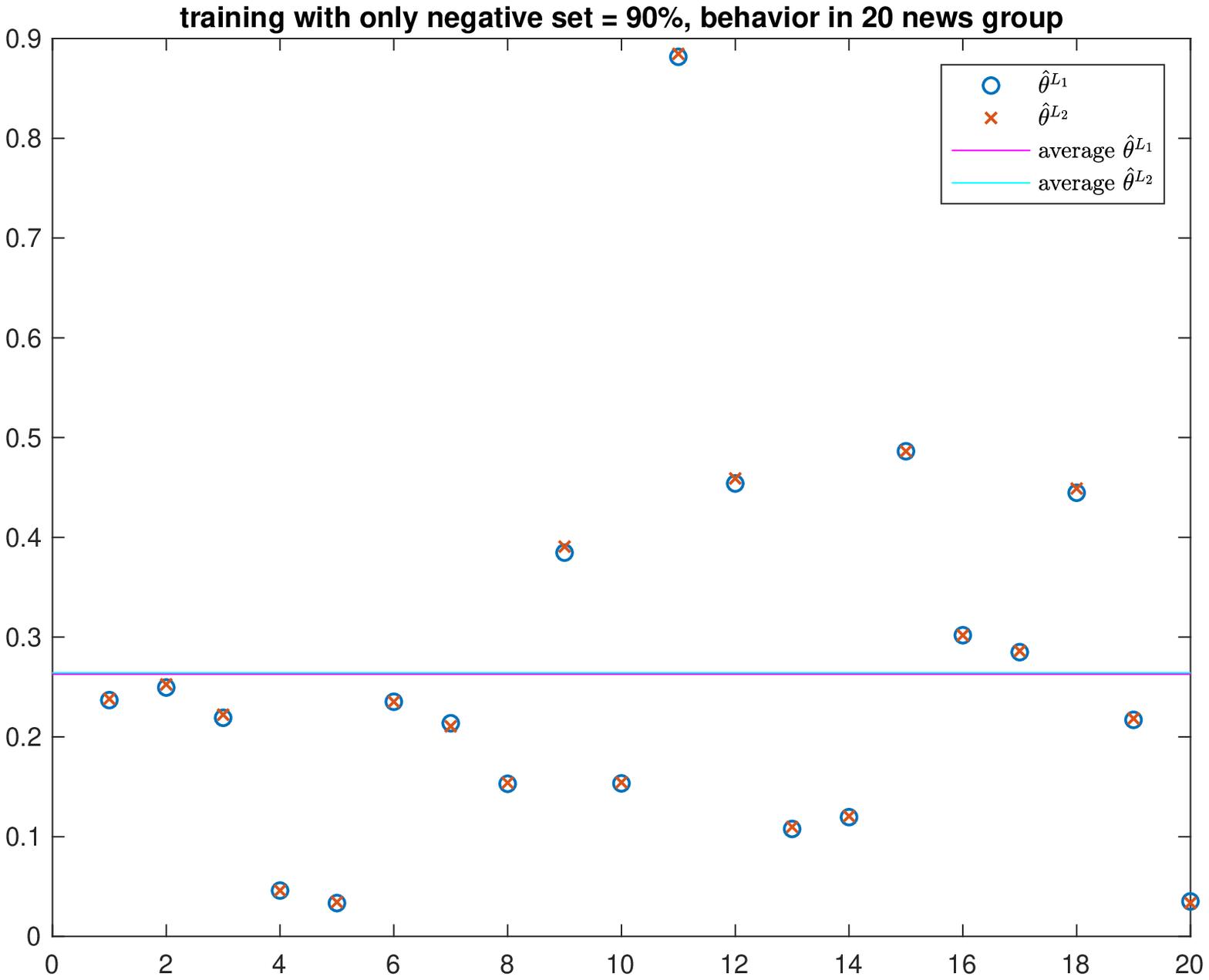}
  }
  \caption{We take 10 largest groups in Reuter-21578 dataset (a), and 20 news group dataset (b), and take 90\% of the data as $S_2$ training set. The y-axis is the accuracy, and the x-axis is the class index.}
  \label{neg}
\end{figure}

\begin{figure}[H] \centering
\subfigure[]{\label{Cor_r}
  \includegraphics[width=0.47\textwidth]{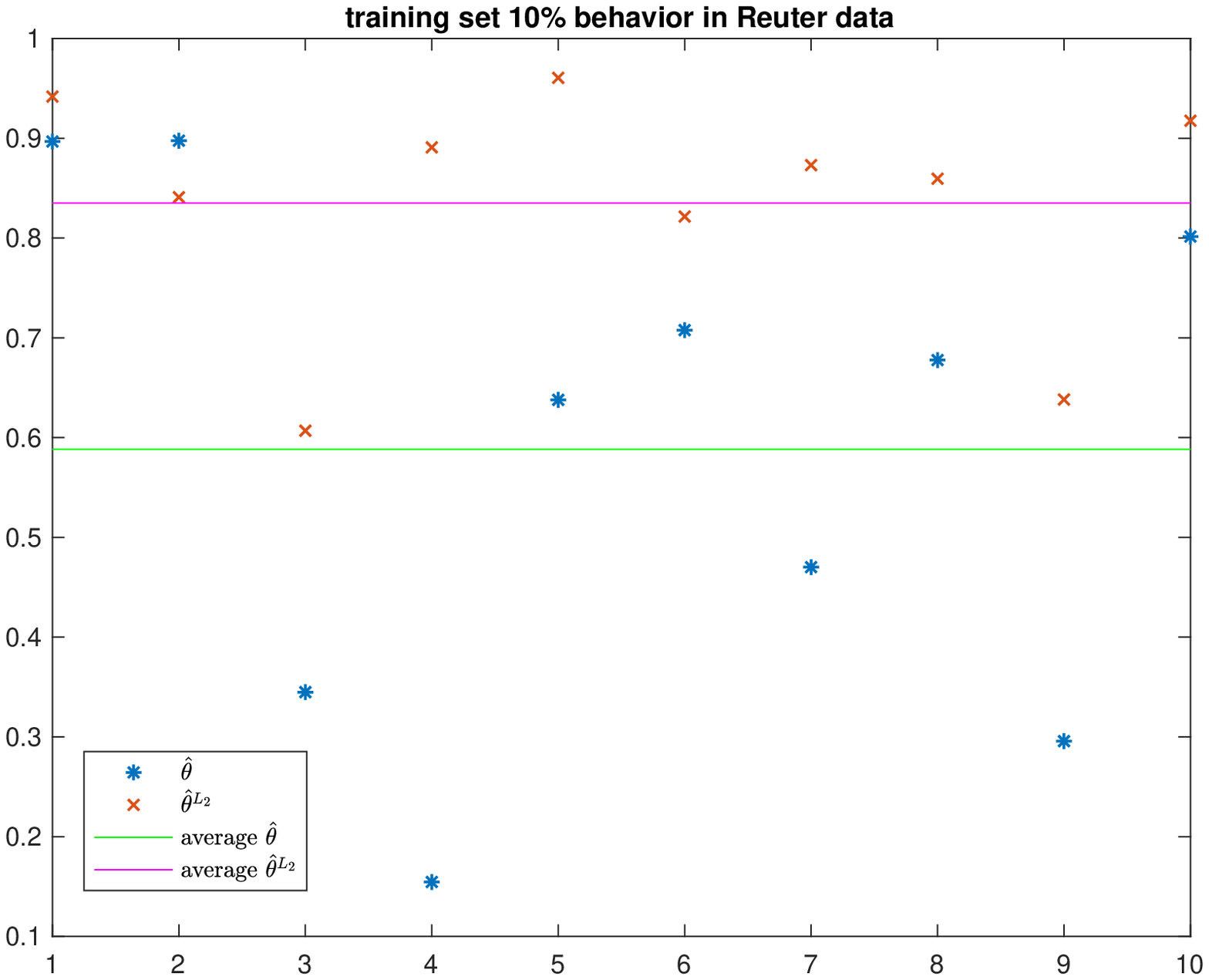}
  }
  \subfigure[]{\label{Cor_2}
  \includegraphics[width=0.47\textwidth]{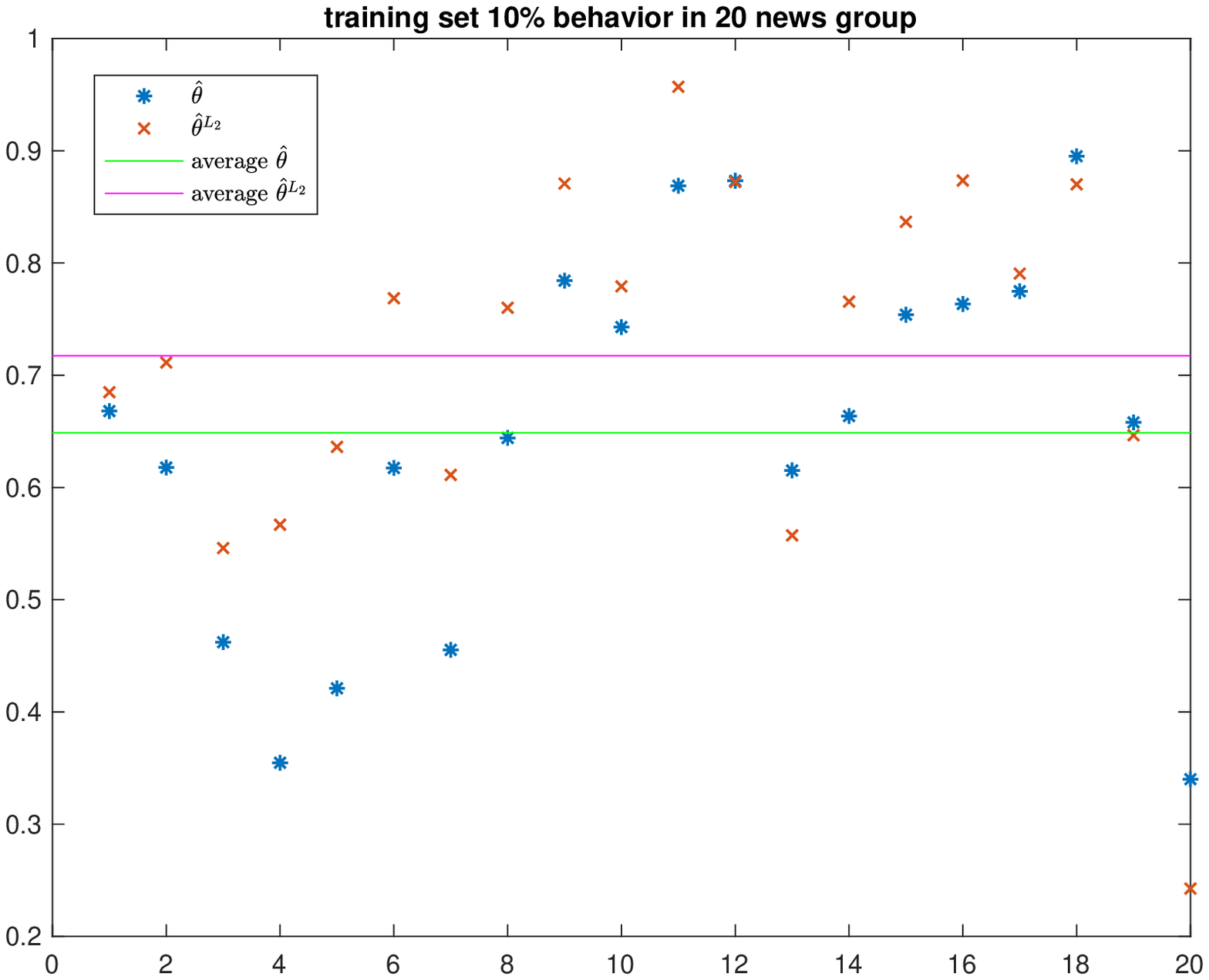}
  }
  \caption{We take 10 largest groups in Reuter-21578 dataset (a), and 20 news group dataset (b), and take 10\% of the data as $S_1$ training set. The y-axis is the accuracy, and the x-axis is the class index.}
  \label{Cor}
\end{figure}

\begin{figure}[H]\centering
\subfigure[]{ \label{over_r}
\includegraphics[width=0.47\textwidth]{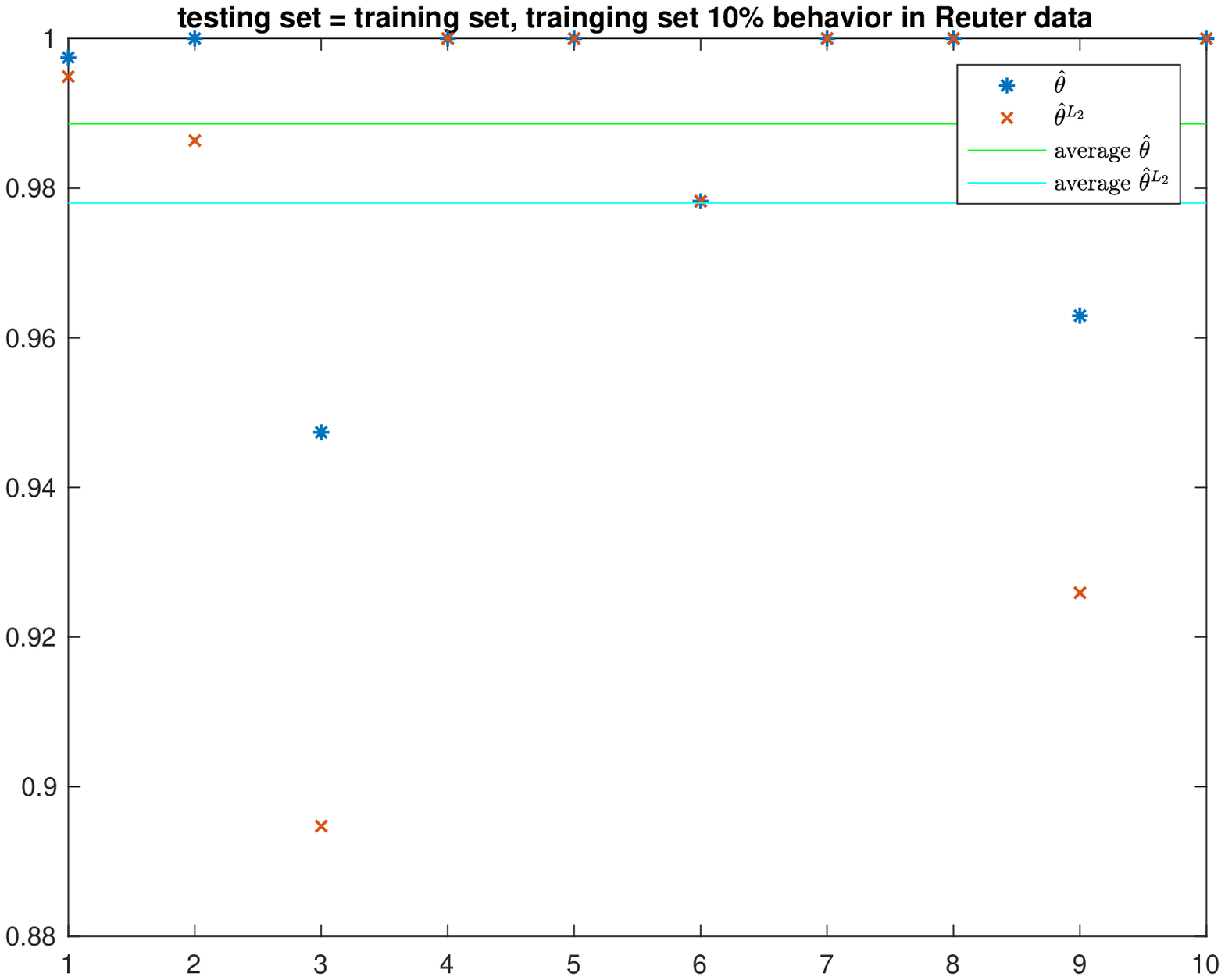}
}
\subfigure[]{ \label{over_20}
\includegraphics[width=0.47\textwidth]{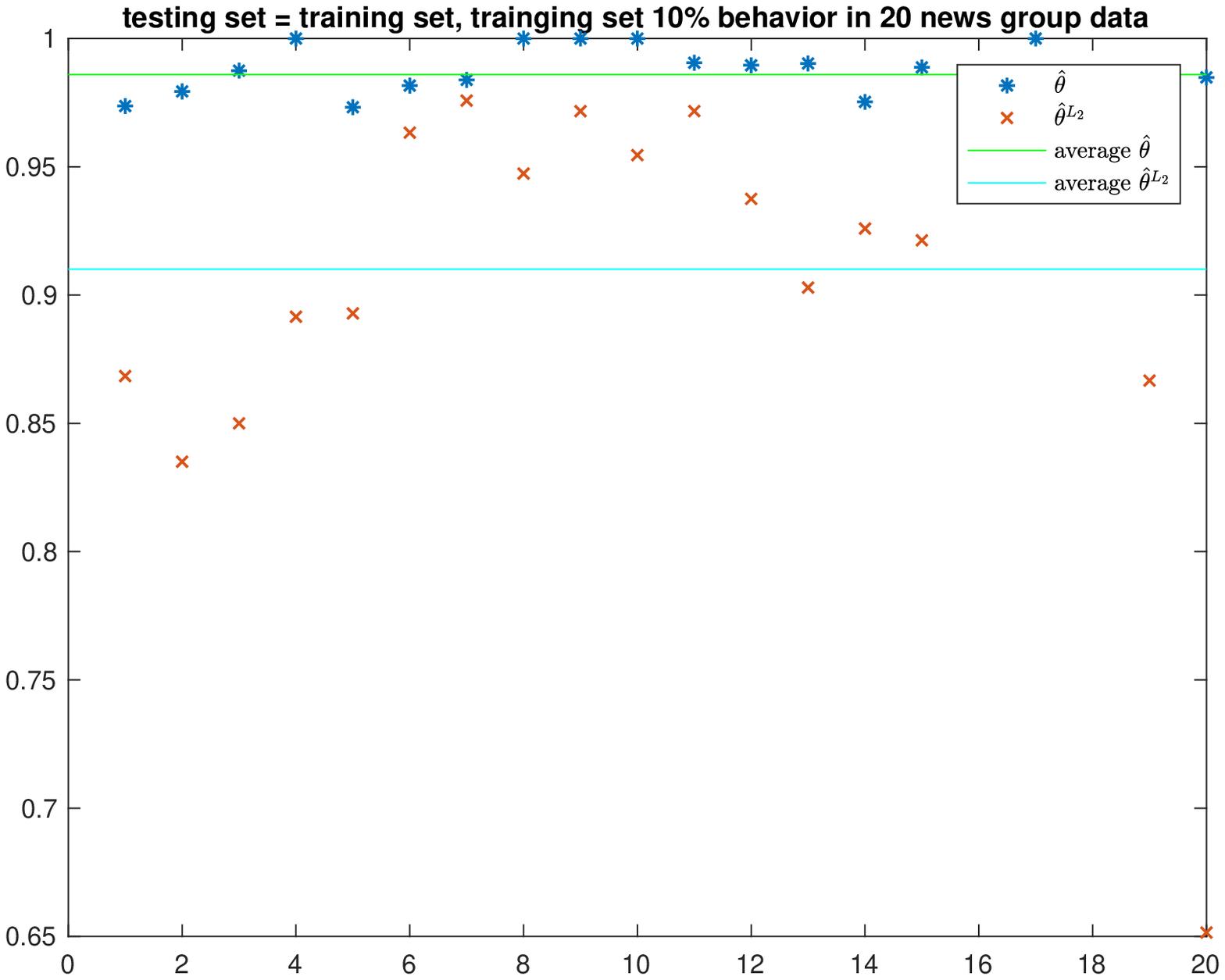}
}
\caption{We take 10 largest groups in Reuter-21578 dataset(a), and 20 news group dataset (b), and take 10\% of the data as $S_1$ training set. We test the result on training set. The y-axis is the accuracy, and the x-axis is the class index.}
 \label{over}
\end{figure}


\begin{thebibliography}{10}

\bibitem{boutell2004learning}
Matthew~R Boutell, Jiebo Luo, Xipeng Shen, and Christopher~M Brown.
\newblock Learning multi-label scene classification.
\newblock {\em Pattern recognition}, 37(9):1757--1771, 2004.

\bibitem{chen2018centroid}
Jiangning Chen, Heinrich Matzinger, Haoyan Zhai, and Mi~Zhou.
\newblock Centroid estimation based on symmetric kl divergence for multinomial
  text classification problem.
\newblock {\em arXiv preprint arXiv:1808.10261}, 2018.

\bibitem{cortes1995support}
Corinna Cortes and Vladimir Vapnik.
\newblock Support-vector networks.
\newblock {\em Machine learning}, 20(3):273--297, 1995.

\bibitem{cour2011learning}
Timothee Cour, Ben Sapp, and Ben Taskar.
\newblock Learning from partial labels.
\newblock {\em Journal of Machine Learning Research}, 12(May):1501--1536, 2011.

\bibitem{dumais1998inductive}
Susan Dumais, John Platt, David Heckerman, and Mehran Sahami.
\newblock Inductive learning algorithms and representations for text
  categorization.
\newblock In {\em Proceedings of the seventh international conference on
  Information and knowledge management}, pages 148--155. ACM, 1998.

\bibitem{friedman1997bayesian}
Nir Friedman, Dan Geiger, and Moises Goldszmidt.
\newblock Bayesian network classifiers.
\newblock {\em Machine learning}, 29(2-3):131--163, 1997.

\bibitem{hofmann1999probabilistic}
Thomas Hofmann.
\newblock Probabilistic latent semantic analysis.
\newblock In {\em Proceedings of the Fifteenth conference on Uncertainty in
  artificial intelligence}, pages 289--296. Morgan Kaufmann Publishers Inc.,
  1999.

\bibitem{jin2003learning}
Rong Jin and Zoubin Ghahramani.
\newblock Learning with multiple labels.
\newblock In {\em Advances in neural information processing systems}, pages
  921--928, 2003.

\bibitem{joachims1998text}
Thorsten Joachims.
\newblock Text categorization with support vector machines: Learning with many
  relevant features.
\newblock In {\em European conference on machine learning}, pages 137--142.
  Springer, 1998.

\bibitem{20_news}
K.~Lang.
\newblock 20 newsgroups data set.

\bibitem{langley1992analysis}
Pat Langley, Wayne Iba, Kevin Thompson, et~al.
\newblock An analysis of bayesian classifiers.
\newblock In {\em Aaai}, volume~90, pages 223--228, 1992.

\bibitem{larkey1998automatic}
Leah~S Larkey.
\newblock Automatic essay grading using text categorization techniques.
\newblock In {\em Proceedings of the 21st annual international ACM SIGIR
  conference on Research and development in information retrieval}, pages
  90--95. ACM, 1998.

\bibitem{reuters_data}
David~D. Lewis.
\newblock Reuters-21578.

\bibitem{li2003learning}
Xiaoli Li and Bing Liu.
\newblock Learning to classify texts using positive and unlabeled data.
\newblock In {\em IJCAI}, volume~3, pages 587--592, 2003.

\bibitem{liu2016recurrent}
Pengfei Liu, Xipeng Qiu, and Xuanjing Huang.
\newblock Recurrent neural network for text classification with multi-task
  learning.
\newblock {\em arXiv preprint arXiv:1605.05101}, 2016.

\bibitem{mccallum1999multi}
Andrew McCallum.
\newblock Multi-label text classification with a mixture model trained by em.
\newblock In {\em AAAI workshop on Text Learning}, pages 1--7, 1999.

\bibitem{nguyen2008classification}
Nam Nguyen and Rich Caruana.
\newblock Classification with partial labels.
\newblock In {\em Proceedings of the 14th ACM SIGKDD international conference
  on Knowledge discovery and data mining}, pages 551--559. ACM, 2008.

\bibitem{Welinder2010wisdom}
S.~Belongie P.~Welinder, S.~Branson and P.~Perona.
\newblock The multidimensional wisdom of crowds.
\newblock In {\em Advances in Neural Information Processing Systems}, pages
  2424--2432, 2010.

\bibitem{ramos2003using}
Juan Ramos et~al.
\newblock Using tf-idf to determine word relevance in document queries.
\newblock In {\em Proceedings of the first instructional conference on machine
  learning}, volume 242, pages 133--142, 2003.

\bibitem{rish2001empirical}
Irina Rish et~al.
\newblock An empirical study of the naive bayes classifier.
\newblock In {\em IJCAI 2001 workshop on empirical methods in artificial
  intelligence}, volume~3, pages 41--46. IBM New York, 2001.

\bibitem{schneider2004new}
Karl-Michael Schneider.
\newblock A new feature selection score for multinomial naive bayes text
  classification based on kl-divergence.
\newblock In {\em Proceedings of the ACL 2004 on Interactive poster and
  demonstration sessions}, page~24. Association for Computational Linguistics,
  2004.

\bibitem{sheng2008get}
Victor~S Sheng, Foster Provost, and Panagiotis~G Ipeirotis.
\newblock Get another label? improving data quality and data mining using
  multiple, noisy labelers.
\newblock In {\em Proceedings of the 14th ACM SIGKDD international conference
  on Knowledge discovery and data mining}, pages 614--622. ACM, 2008.

\bibitem{tang2015document}
Duyu Tang, Bing Qin, and Ting Liu.
\newblock Document modeling with gated recurrent neural network for sentiment
  classification.
\newblock In {\em Proceedings of the 2015 conference on empirical methods in
  natural language processing}, pages 1422--1432, 2015.

\bibitem{zhang2016partial}
Min-Ling Zhang, Bin-Bin Zhou, and Xu-Ying Liu.
\newblock Partial label learning via feature-aware disambiguation.
\newblock In {\em Proceedings of the 22nd ACM SIGKDD International Conference
  on Knowledge Discovery and Data Mining}, pages 1335--1344. ACM, 2016.

\bibitem{zhang2007ml}
Min-Ling Zhang and Zhi-Hua Zhou.
\newblock Ml-knn: A lazy learning approach to multi-label learning.
\newblock {\em Pattern recognition}, 40(7):2038--2048, 2007.

\bibitem{zhang2014review}
Min-Ling Zhang and Zhi-Hua Zhou.
\newblock A review on multi-label learning algorithms.
\newblock {\em IEEE transactions on knowledge and data engineering},
  26(8):1819--1837, 2014.

\end{thebibliography}
\end{document}